\title[Fairness Explainability using Optimal Transport]{Fairness Explainability using Optimal Transport \\ with Applications in Image Classification}
\newtheorem{theo}{Theorem}[section]
\newtheorem{rem}[theo]{Remark}
\newtheorem{prop}[theo]{Proposition}
\newcommand{\argmin}[1]{\underset{#1}{\operatorname{arg}\!\operatorname{min}}\;}
\begin{document}

\maketitle

\begin{abstract}%
Ensuring trust and accountability in Artificial Intelligence systems demands explainability of its outcomes. Despite significant progress in Explainable AI, human biases still taint a substantial portion of its training data, raising concerns about unfairness or discriminatory tendencies. Current approaches in the field of Algorithmic Fairness focus on mitigating such biases in the outcomes of a model, but few attempts have been made to try to explain \emph{why} a model is biased. To bridge this gap between the two fields, we propose a comprehensive approach that uses optimal transport theory to uncover the causes of discrimination in Machine Learning applications, with a particular emphasis on image classification. We leverage Wasserstein barycenters to achieve fair predictions and introduce an extension to pinpoint bias-associated regions. This allows us to derive a cohesive system which uses the enforced fairness to measure each features influence \emph{on} the bias. Taking advantage of this interplay of enforcing and explaining fairness, our method hold significant implications for the development of trustworthy and unbiased AI systems, fostering transparency, accountability, and fairness in critical decision-making scenarios across diverse domains.
%we obtain scores that are independent of a sensitive variable but keep their marginal orderings. This step ensures predictive accuracy but also helps us to recover the regions most associated with the generation of the biases. Our findings hold significant implications for the development of trustworthy and unbiased AI systems, fostering transparency, accountability, and fairness in critical decision-making scenarios across diverse domains.%

\end{abstract}

\begin{keywords}%
  %List of keywords
  Algorithmic Fairness, Explainable Artificial Intelligence, Image Classification
\end{keywords}

\section{Introduction}

Machine Learning (ML) algorithms are widely used in critical domains ranging from recruiting and law enforcement to personalized medicine~\cite{berk2012criminal, garcia2018proposing, esteva2021deep}. Their usage is not beyond debate however, as fairness related issues remain poorly understood. Further, ML algorithms can perpetuate societal stereotypes and discriminatory practices by associating protected attributes, such as gender and race, with predictions - even if the attribute in question is not directly used in the modelling process~\cite{pedreshi2008discrimination,noiret2021bias, mehrabi2021survey}. This can lead to discriminatory behavior towards certain subgroups, where examples include sexist recruiting algorithm, facial recognition systems that perform poorly for females with darker skin and challenges in recognizing specific subgroups in self-driving cars~\cite{goodall2014ethical, nyholm2016ethics, Dastin_2018}. None of these algorithms were designed with explicit malice, but nevertheless delivered biased results. As \cite{kearns2019ethical} put it, ``\textit{machine learning won’t give you anything like gender neutrality ‘for free’ that you didn’t explicitely ask for}". Whereas there has been notable progress in the elimination of biases from black box models, challenges persist in identifying the source of biases and explaining \emph{why} unfair outcomes materialized~\cite{ali2023explainable}. Especially in fields where complex black-box models are employed, explaining unfairness is often reliant on testing hypotheses one-by-one, which can quickly become infeasible in the era of big-data. A key reason for this is that existing methods aim to explain how a given \emph{score} was constructed not how a \emph{bias} was introduced. Additionally, criticism has been raised regarding the potential misuse of standard explainable AI tools, which can result in misleading explanations that validate incorrect models~\cite{alvarez2018towards, rudin2019stop}. This issue, often referred to as "\textit{fairwashing}"~\cite{aivodji2019fairwashing, aivodji2021characterizing}, underscores the importance of exercising caution in the application of such tools.

To address both the fairness and explainablity concerns, we take a two-fold approach in this article. As a \textbf{First Step} (\textbf{A}), we use existing research on mitigating the impact of sensitive attributes within a \textit{fairness-aware} framework and extend this explicitly to pre-trained models using optimal transport theory~\cite{villani2021topics}. We then turn our attention to explainablity and issues arising from fairwashing in a \textbf{second step} (\textbf{B}). Here, aim to model the algorithmic bias directly, providing valuable insights into the root causes behind the biases. Importantly, our work fulfills both local (specific model outputs) and global (model insights from data) explanation requirements, as highlighted in~\cite{arrieta2020explainable}. The combination of these two steps, (\textbf{A}) and (\textbf{B}), has the advantage that algorithmic fairness can be enforced and steps to mitigate the source of the biases can be put into place. By filling this important gap, we contribute to enhancing fairness, transparency and accountability in Artificial Intelligence (AI) systems. Note that throughout this article, we focus on applications involving images, rather than traditional tabular data, as it allows us to showcase the effectiveness of our approach. Specifically, with images, a more direct interpretation is possible, even without domain knowledge. %Further, computer vision is a domain which makes heavy use of deep neural networks and transfer learning, which complicates the usage of many existing techniques. 
However, the techniques presented here can easily be applied to standard tabular data sets as well.

\subsection{Scoping and Definitions}

The field of Algorithmic Fairness considers different metrics for distinct goals. Here we opt to consider fairness at the distributional level, specifically, we focus on the Demographic Parity (DP) notion of fairness~\cite{calders2009building}. This strict definition aims to achieve independence between sensitive attributes and predictions without relying on labels. Formally, let $(\boldsymbol{X}, S, Y)$ be a random tuple with distribution $\mathbb{P}$, where $\boldsymbol{X}\in\mathcal{X}\subset \mathbb{R}^d$ represents the features, $S\in\mathcal{S}\subset \mathbb{N}$ a sensitive feature, considered discrete, across which we would like to impose fairness and $Y\in\mathcal{Y} := \{0, 1\}$ represents the task to be estimated. As an illustration, consider Figure \ref{fig:tupleviz}. 
%In our example, the image tensor constitutes the features that could otherwise be represented by a tabular data set, the sensitive feature $S$ is the gender of the person in the image and the target attribute $Y$ is whether a person is young. 
Our study primarily focuses on binary classification tasks; however, the methodologies and techniques discussed can be readily extended and generalized to regression tasks or multi-task problems. Also note that we %\emph{want} to
include the sensitive variable $S$ in the model, which is a somewhat paradoxical feature in Algorithmic Fairness. 
%Generally, in order to successfully de-bias results, the variable responsible for the bias needs to be included in the model. 
However, both empirical studies \cite{lipton2018does} and theoretical research \cite{gaucher2023fair} have consistently shown that de-biasing algorithms that do not consider the sensitive attribute, referred to as \textit{fairness-unaware}, exhibit inferior fairness compared to \textit{fairness-aware} approaches \cite{dwork2012fairness} that leverage the sensitive feature. 

\begin{figure}[ht]
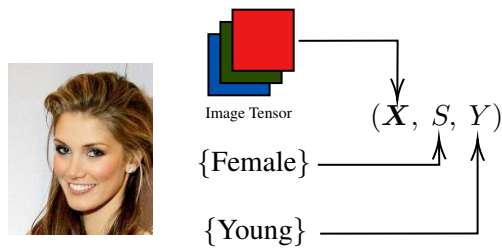

	\vspace{.1in}
	\centering
    \include{tikz/explainer_attributes}
    \caption{
    Attributes of an Image, here we consider the image to be the features (that is, $\boldsymbol{X}$) and the sensitive attribute (here, gender) as well as the label (young) a categorical variable. 
    }
    \label{fig:tupleviz}
\end{figure}

In binary classification, we aim to determine the probability response to get $1$ in $\mathcal{Z}:= [0, 1]$ for classifying $(\boldsymbol{X}, S)$, also known as a \textit{probabilistic classifier} (or \textit{soft} classifier). To achieve strong DP-fairness for a given predictor $f$, the objective is to ensure that $S$ and $f(\boldsymbol{X}, S)$ are independent. This definition is more flexible than its commonly used \textit{weak} counterpart. Weak DP-fairness, aims to establish the independence of $S$ and the \textit{hard} classifier $c_f (\boldsymbol{X}, S):= \mathds{1}\{f (\boldsymbol{X}, S) \geq 0.5\}$ but restricts the user to a given threshold. Throughout our study, our goal is to satisfy the strong DP-fairness, formally defined as the following conditions for all $s, s'\in\mathcal{S}$ and $u\in\mathbb{R}$:

\begin{equation}\label{eq:IntroDP}
    \mathbb{P}\left(f(\boldsymbol{X}, S) \leq u | S = s \right) = \mathbb{P}\left(f(\boldsymbol{X}, S) \leq u | S = s' \right) \enspace.
\end{equation}

Further, we define $\mathcal{F}$ as the set of soft classifiers of the form $f:\mathcal{X}\times\mathcal{S}\to \mathcal{Z}$. Also, given $s\in\mathcal{S}$ we denote,
\begin{itemize}
    \item $\nu_{f}$ (resp. $\nu_{f|s}$) the probability measure of $f(\boldsymbol{X}, S)$ (resp. $f(\boldsymbol{X}, S)|S=s$);

    \item $F_{f|s}(u) := \mathbb{P}\left( f(\boldsymbol{X}, S) \leq u|S=s \right)$ its cumulative distribution function (CDF);

    \item $Q_{f|s}(v) := \inf\{u\in\mathbb{R}:F_{f|s}(u)\geq v \}$ the associated quantile function.
\end{itemize}
Throughout the remainder of the paper, we assume that for any $f\in\mathcal{F}$, both the measures $\nu_f$ and $\nu_{f|s}$ have finite second-order moments and their densities exist. With these notations, the DP notion of fairness defined in Equation~\eqref{eq:IntroDP} is rewritten as $F_{f|s}(u) = F_{f|s'}(u)$ for all $s, s'\in\mathcal{S}$ and $u\in\mathbb{R}$.

\subsection{Related Work}

\subsubsection{Algorithmic Fairness}

In recent years, research on algorithmic fairness has grown significantly. The most common approaches can broadly be categorized into pre-processing, in-processing, and post-processing methods. Pre-processing ensures fairness in the input data by removing biases before applying ML models \cite{park2021learning,qiang2022counterfactual}. In-processing methods incorporate fairness constraints during model training  \cite{wang2020towards, joo2020gender}, where fairness constraints usually modify the loss landscape and prevent the model from learning an unfair solution. Whereas these two approaches focus on the model parameters itself, post-processing techniques aim to achieve fairness through modifications of the final scores \cite{karako2018using, kim2019multiaccuracy}. This has the advantage that it works with any kind of estimator, including (partially) pre-trained models. Especially in computationally intensive fields, transfer learning or partial fine-tuning are prevalent to reduce training time and improve generalization. Due to this, post-processing methods are easily integrable into a standard workflow at low computational cost, hence we focus our work to this latter category.
Of particular importance is the literature using Optimal Transport, a mathematical framework for measuring distributional differences. Intuitively, the goal is to \emph{transport} unfair scores to fair ones while minimizing the effects of this intervention to maintain predictive accuracy. In regression, methods like \cite{Chzhen_Denis_Hebiri_Oneto_Pontil20Wasser} and \cite{gouic2020projection} minimize Wasserstein distance to reduce discrimination. Similarly, in classification, \cite{chiappa2020general} and \cite{gaucher2023fair} leverage optimal transport to achieve fair scores. Recent work by \cite{hu2023fairness} also achieves fairness in multi-task learning through joint optimization. Though still a nascent field, applications thereof become more common \cite{zehlike2020matching,charpentier2023mitigating}. However, despite the extensive use of optimal transport theory in algorithmic fairness, there is only limited research that delves into applications that go beyond the standard case of tabular data, a first shortcoming we address in this article. 

\subsubsection{Explainable AI}
We focus on creating a simple fairness explanation method for computer vision, narrowing down our exploration to two Explainable AI (XAI) subfields.
%Hence we restrict our view of the vast field of XAI methods to two sub fields. 
Among the most widely known methods are \emph{model-agnostic} techniques, such as LIME~\cite{ribeiro2016should, garreau2021does} and SHAP~\cite{shapley1997value, lundberg2017unified}, that do not depend on specific assumptions about the model's architecture. Though these approaches can be extended to the analysis of images, a range of XAI methods have been developed more specifically for the use of deep neural networks. These methods commonly focus on local explainability, which often involves highlighting important pixels (referred to as \emph{attention maps}) for individual task predictions. Global explainablity can then be achieved through the identification of significant regions across the whole prediction analysis. As these approaches leverage the specific architecture of neural networks they are referred to as \emph{model-specific} approaches. Notable examples include Grad-CAM~\cite{selvaraju2017grad} and its various variants, like Grad-CAM++ \cite{chattopadhay2018grad} and Score-CAM \cite{wang2020score}. Recent work by \cite{franco2021toward} for fair and explainable systems, generating two attention maps for local insight. Global explainability is achieved through t-SNE representations, but explicit discrimination explanations are often lacking, raising potential \emph{fairwashing} concerns~\cite{alikhademi2021can}. Our approach sets itself apart from the aforementioned methods by directly generating attention maps that specifically describe the model's unfair decisions, offering a clearer and more focused explanation for discriminatory outcomes.

\subsection{Contributions and Outline}
In summary, we extend the literature of fairness-aware algorithms based on optimal transport theory through the following points:
\begin{itemize}
    \item \textbf{Fair decision-making}:
    We adapt a post-processing model using optimal transport theory for computer vision tasks, bringing the theory closer to the community. We ensure fair and unbiased outcomes and show that the solution is optimal with respect to the relative rankings, and independent of the bias. 
    \item \textbf{Explainable artificial intelligence}: 
    Our main contribution is to use the optimal transport plan to develop an XAI approach for identifying changes in the data, describing unfair results. In computer vision applications, our method directly highlights the regions most responsible for the stated bias, facilitating direct identification of discrimination. The method is also easily extendable to tabular data sets. 
\end{itemize}
The remainder of this article is structured as follows. First, we provide a brief background on optimal transport theory and establish its connection to algorithmic fairness. Then, we turn our focus to our XAI methodologies to uncover the causes of discrimination.
Finally, we showcase their performance through numerical experiments on the \texttt{CelebA} dataset.

\section{Background on Optimal Transport}

In this section, we present the fundamental concepts from optimal transportation theory. Specifically, we focus on the Wasserstein distance and give a brief overview of notable results in optimal transport theory with one-dimensional measures, where all the main results can be found in \cite{villani2009optimal, santambrogio2015optimal, villani2021topics}.

\subsection{Wasserstein Distances}

Let $\nu_{f_1}$ and $\nu_{f_2}$ be two probability measures on $\mathcal{Z}$. The squared Wasserstein distance (cf. \cite{santambrogio2015optimal}, definition \S5.5.1) between $\nu_{f_1}$ and $\nu_{f_2}$ is defined as
\begin{equation*}
    \mathcal{W}_2^2(\nu_{f_1}, \nu_{f_2}) = \inf_{\pi\in\Pi(\nu_{f_1}, \nu_{f_2})} \mathbb{E}_{(Z_1, Z_2)\sim \pi}\left(Z_2-Z_1\right)^2 \enspace,
\end{equation*}
where $\Pi(\nu_{f_1}, \nu_{f_2})$ is the set of distributions on $\mathcal{Z}\times\mathcal{Z}$ having $\nu_{f_1}$ and $\nu_{f_2}$ as marginals. If the infimum is achieved, the resulting coupling is referred to as the optimal coupling between $\nu_{f_1}$ and $\nu_{f_2}$. If either of the predictors belongs to $\mathcal{F}$, the optimal coupling can be determined (refer to \cite{villani2021topics}, Thm 2.12) as follows: if $Z_1\sim\nu_{f_1}$ and $Z_2\sim\nu_{f_2}$, where $f_2\in\mathcal{F}$, there exists a mapping $T : \mathbb{R}\to \mathbb{R}$ such that%T_{f_2\to f_1}
$$
\mathcal{W}_2^2(\nu_{f_1}, \nu_{f_2}) = \mathbb{E}\left( Z_2 - T(Z_2)\right)^2\enspace,
$$
with $T(Z_2)\sim \nu_{f_1}$.
We call $T$ the optimal transport map from $\nu_{f_2}$ to $\nu_{f_1}$. Moreover, in the univariate setting, a closed-form solution is explicitly provided as: $T(\cdot) = Q_{f_1}\circ F_{f_2} (\cdot)$.

\subsection{Wasserstein Barycenters}

Throughout this article, we will frequently make use of \textit{Wasserstein Barycenters}. It can be defined for a given family of $K$ measures $(\nu_{f_1}, \dots, \nu_{f_K})$ and weights $\boldsymbol{w} = (w_1, \dots, w_K) \in\mathbb{R}_+^K$ such that $\sum_{s=1}^Kw_s = 1$. Then $Bar(w_s, \nu_{f_s})_{s=1}^K$ represents the Wasserstein barycenter of these measures which is the minimizer of
\begin{equation*}
    Bar(w_s, \nu_{f_s})_{s=1}^K = \argmin{\nu}\sum_{s=1}^{K}w_s\cdot \mathcal{W}_2^2\left( \nu_{f_s}, \nu\right)\enspace.
\end{equation*}
The work in \cite{agueh2011barycenters} shows that in our configuration, with $f_s\in\mathcal{F}$, this barycenter exists and is unique. Put into words, the Wasserstein barycenter can be used to find a representative distribution that lies between multiple given distributions. For our applications, this will ensure that the predictive distributions given any values $s$ will coincide. The optimal transport problem then aims to minimize the total amount of changes required to achieve this.

\section{Fairness Projection using Optimal Transport}\label{sec:Fair} %Problem Setup

Optimal transport theory provides a means to ensure specific forms of algorithmic fairness. We provide a short summary of the some necessary concepts, all the main results can be found in~\cite{chiappa2020general, Chzhen_Denis_Hebiri_Oneto_Pontil20Wasser} and~\cite{gouic2020projection}.

\subsection{Unfairness and Risk: a Warm-up} %Fairness under Demographic Parity
Paraphrasing the quote above that machine learning will not give a fair classifier for free, we first need to define both the objective of the classification and the concept of DP in a unified manner. DP will be used to determine the fairness of a classifier. 
\begin{definition}[Demographic Parity] Given a soft classifier $f$, its unfairness is quantified by
    \begin{equation}\label{eq:Unf}
        \mathcal{U}(f) = \max_{s, s'\in\mathcal{S}}\sup_{u\in\mathcal{Z}}\left| F_{f|s}(u) - F_{f|s'}(u) \right|\enspace,
    \end{equation}
and $f$ is called (DP-)fair if and only if $\mathcal{U}(f)=0$.
\end{definition}
Consider $f^*(\boldsymbol{X}, S) := \mathbb{E}\left[Y|\boldsymbol{X}, S\right]$, the Bayes rule that minimizes the following squared risk 
$$
\mathcal{R}(f) := \mathbb{E} \left( Y - f(\boldsymbol{X}, S) \right)^2\enspace.
$$ 
The associated \textit{hard} classifier $c_{f^*}(\boldsymbol{X}, S)$ has the property of minimizing the risk of misclassification, which makes the squared risk applicable for both regression and classification (for more details, see \cite{gaucher2023fair}).
In line with this, we adopt a popular approach to algorithmic fairness by incorporating DP-fairness principles into risk minimization~\cite{Chzhen_Denis_Hebiri_Oneto_Pontil20Wasser, gouic2020projection}, that is:
\begin{equation*}
    \min_{f\in\mathcal{F}} \left\{\mathcal{R}(f) : \mathcal{U}(f) = 0\right\}\enspace.
\end{equation*}
% \begin{figure}[htbp]
% \centerline{\includegraphics[scale=0.30]{}}
% \caption{Densities of simulated predictions, where $\mathcal{S}$ is restricted to $\lbrace 0,1\rbrace $ and their Wasserstein Barycenter}
% \label{fig:viz_barycenter}
% \end{figure}
By construction, this optimization effectively balances both risk $\mathcal{R}$ and unfairness $\mathcal{U}$, leading to improved predictive performance, reduced biases, and mitigation of potentially offensive or discriminatory errors. 

\subsection{Optimal Fair Projection: Theoretical and Empirical Estimators}
\label{subsec:Estimators}
The two objectives, fairness and predictive accuracy are often in conflict with one another. Most of the recent work in algorithmic fairness has therefore focused on finding either a precise joint solution or an optimal trade-off between the two. When starting from the best possible predictor without any constraints, we refer to its optimal \emph{fair} counterpart as the \emph{optimal fair projection}. This estimator should minimize the unfairness across the sensitive variables, while maintaining the best predictive accuracy under this constraint. Much work has been done within the field to achieve univariate optimal fair projections. Recall $p_s = \mathbb{P}(S=s)$ and let $f_B\in\mathcal{F}$, where its measure is the Wasserstein barycenter $\nu_{f_B} := Bar(p_s, \nu_{f^*|s})_{s\in\mathcal{S}}$. Then, studies conducted by \cite{Chzhen_Denis_Hebiri_Oneto_Pontil20Wasser} and \cite{gouic2020projection} demonstrate that
\begin{equation*}
    f_B = \argmin{f\in\mathcal{F}} \lbrace \mathcal{R}(f) : \mathcal{U}(f) = 0\rbrace\enspace.
\end{equation*}
Therefore, $f_B$ represents the optimal fair predictor in terms of minimizing unfairness-risk. Previous studies also offer a precise closed-form solution: for all $(\boldsymbol{x}, s) \in \mathcal{X}\times \mathcal{S}$,
\begin{equation}\label{eq:OptFair}
    f_B(\boldsymbol{x}, s) = \left( \sum_{s'=1}^{K} p_{s'}Q_{f^*|s'}\right)\circ F_{f^*|s}\left( f^*(\boldsymbol{x}, s) \right)\enspace.
\end{equation}
To employ the results on real data the plug-in estimator of the Bayes rule $f^*$ is given by $\hat{f}$, which corresponds to any DP-unconstrained ML model trained on a training set $\{(\boldsymbol{x}_i, s_i, y_i)\}_{i=1}^{n}$ $n$ i.i.d. realizations of $(\boldsymbol{X}, S, Y)$. The empirical counterpart is then defined as:
\begin{equation}\label{eq:plugin}
  \widehat{f_B}(\boldsymbol{x}, s) = \left( \sum_{s'=1}^{K} \Hat{p}_{s'}\Hat{Q}_{\Hat{f}|s'}\right)\circ \Hat{F}_{\Hat{f}|s}\left( \Hat{f}(\boldsymbol{x}, s) \right)\enspace,  
\end{equation}
where $\Hat{p}_s$, $\Hat{F}_{\hat{f}|s}$ and $\Hat{Q}_{\hat{f}|s}$ corresponds to the empirical counterparts of $p_s$, $F_{f^*|s}$ and $Q_{f^*|s}$. Note that, with the exception of $\hat{f}$, the remaining quantities can be constructed using an unlabeled \emph{calibration} dataset, denoted as $\mathcal{D}^{\text{pool}} := \{(\boldsymbol{X}_i, S_i)\}_{i=1}^N$, which consists of N i.i.d. copies of $(\boldsymbol{X}, S)$. The pseudo-code of this approach is provided in Algorithm \ref{alg:projection}. We also visualize a possible model flow in Figure \ref{fig:model_arch}, where the \textit{calibration layer} corresponds to the inner workings of Algorithm \ref{alg:projection} and specifically Equation \ref{eq:plugin}. \cite{Chzhen_Denis_Hebiri_Oneto_Pontil20Wasser} show that if the estimator $\Hat{f}$ is a good proxy for $f^*$, then under mild assumptions on the distribution $\mathbb{P}$, the calibrated post-processing approach $\widehat{f_B}$ is a good estimator of $f_B$, enabling accurate and fair estimation of the instances. \cite{gaucher2023fair} demonstrates that the hard classifier $c_{f_B}$ maximizes accuracy under DP-constraint, and the classifier $c_{\widehat{f_{B}}}$ is proven to be a good estimator.

\begin{figure}[ht]
  \centering
  \begin{minipage}{.7\linewidth}
\begin{algorithm}[H]
    \footnotesize
   \caption{Fairness projection.}
   \label{alg:projection}
\begin{algorithmic}
   \STATE {\bfseries Input:} instance $(\boldsymbol{x}, {s})$, base estimator $\Hat{f}$, unlabeled data $\mathcal{D}^{\text{pool}}  = \{(\boldsymbol{x}_i, s_i)\}_{i=1}^N$.%, and  \emph{i.i.d} uniform perturbations $(\zeta_{k,i}^{s})_{k,i,s}$.
   \vspace{0.15cm}
   \STATE {\bf \quad Step 0.} Split $\mathcal{D}^{\text{pool}}$ to construct the group-wise sample
   $$
   \{\boldsymbol{x}_{i}^s\}_{i=1}^{N_s}\sim \mathbb{P}_{\boldsymbol{X}|S=s} \quad\text{for any } s\in \mathcal{S}\enspace;
   $$
   \quad with $N_s$ the number of images corresponding to $S=s$
   %\vspace{-0.38cm}
   \vspace{0.1cm}
   \STATE {\bf \quad Step 1.} Compute the frequencies $(\widehat{p}_s)_s$ from $\{s_i\}_{i=1}^N$;
   \vspace{0.1cm}
   \STATE {\bf \quad Step 2.} Estimate $(\hat{F}_{\hat{f}|s})_s$ and $(\hat{Q}_{\hat{f}|s})_s$ from $\{\boldsymbol{x}_{i}^s\}_{i=1}^{N_s}$;
   \vspace{0.1cm}
   \STATE {\bf \quad Step 3.} Compute $\widehat{f}_B$ according to Eq.~\eqref{eq:plugin};
   \vspace{0.15cm}
   \STATE {\bfseries Output:} fair predictor $\widehat{f}_B(\boldsymbol{x},s)$ at point $(\boldsymbol{x},s)$.
\end{algorithmic}
\end{algorithm}
\end{minipage}
\end{figure}

\section{Explainable AI using the Transportation Plan}\label{sec:XAIImage}

Whereas the results from above enable the correction of given scores, they can be considered a treatment of the symptoms rather than the cause of unfairness. %Whereas this is an important first step, we aim to also address the sources of unfairness. 
%Our aim is to pinpoint these sources.
To this end, we extend the fairness procedures from above to explicitly pinpoint these sources bias. The key idea that we pursue is that the transport map used in Equation \eqref{eq:plugin} can be used to construct group-wise counterfactual estimates. %This in turn allows a natural interpretation of the difference of pre- and post-processed scores and enables the use of other well-known XAI methods to identify the underlying causes of unfairness. 
This approach clarifies differences between pre- and post-processed scores, enabling the use of established XAI methods to uncover the underlying causes of unfairness.

\subsection{Local explainability}

To isolate the features responsible for discrimination, we extend the fair projection method by introducing an auxiliary learning task to detect the source of biases directly. The binary task, denoted $\Tilde{Y}\in\Tilde{\mathcal{Y}}:=\{0, 1\}$, is estimated using the distributions of the unfair predictor $f^*(\boldsymbol{X}, S)$ and the DP-fair predictor $f_B^*(\boldsymbol{X}, S)$. This task can be chosen according to specific goals, and we provide a sample of possibilities in Table~\ref{tab:BiasDetectTask}. The formulation
$$
d_B(\boldsymbol{X}, S) := f_B^*(\boldsymbol{X}, S) - f^*(\boldsymbol{X}, S)\enspace,
$$
then offers an intuitive explanation of unfairness. As an example, in the context of a wage model, a positive value of $d_B(\boldsymbol{X}, S)$ indicates a group discrimination for an individual $(\boldsymbol{X}, S)$ while a negative value might indicate favoritism. Its magnitude $|d_B(\boldsymbol{X}, S)|$ can be interpreted as the ``degree" of discrimination or favoritism, and its squared version an indicator for extremes. The proposition below provides an interpretation of the quantity $d_B(\boldsymbol{X}, S)$ within the probabilistic framework, specifically in the context of a binary sensitive attribute scenario.
\begin{prop}[Bias detection characterization] 
\label{prop:BiasDetection}
Suppose $\mathcal{S}=\{1, 2\}$ a binary sensitive feature scenario. Given $(\boldsymbol{x}, s)\in\mathcal{X}\times S$ and $\Bar{s}\in \mathcal{S}-\{s\}$, there exists an optimal transport map from $\nu_{f^*|s}$ to $\nu_{f^*|\Bar{s}}$, denoted $T_{s\to \Bar{s}} : \mathcal{Z}\to \mathcal{Z}$, such that $d_B(\boldsymbol{x}, s) = f^*_B(\boldsymbol{x}, s) - f^*(\boldsymbol{x}, s)$ can be rewritten as,
    %Suppose $S\in\mathcal{S}=\{1, 2\}$ is a binary sensitive feature. Given $(\boldsymbol{x}, s)\in\mathcal{X}\times S$, $|d_B(\boldsymbol{x}, s)|$ can be decomposed into two types of bias. Let $\Bar{s}\in \mathcal{S}-\{s\}$, then
    %$$
    %\left |d_B(\boldsymbol{x}, s)\right| \leq p_{\Bar{s}}\cdot\left( \ \left|T_{s\to\Bar{s}} (f^*(\boldsymbol{x}, s)) - f^*(\boldsymbol{x}, \Bar{s})\right| +  \left|f^*(\boldsymbol{x}, \Bar{s}) - f^*(\boldsymbol{x}, s)\right|\ \right)\enspace,
    %$$
    \begin{equation}
    \label{eq:BiasDetectChara}
        d_B(\boldsymbol{x}, s) = p_{\Bar{s}}\cdot\left(\ T_{s\to\Bar{s}} \circ f^*(\boldsymbol{x}, s) - f^*(\boldsymbol{x}, s)\ \right)\enspace,
    \end{equation}
    where $T_{s\to\Bar{s}} \circ f^*(\boldsymbol{X}, s)\sim \nu_{f^*|\Bar{s}}$.
\end{prop}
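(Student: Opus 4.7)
The plan is to obtain the identity by simply unrolling the closed-form expression for $f_B$ given in Equation~\eqref{eq:OptFair}, specialized to the binary case $\mathcal{S}=\{1,2\}$, and then recognizing the composition $Q_{f^*|\bar s}\circ F_{f^*|s}$ as the one-dimensional optimal transport map $T_{s\to\bar s}$.

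First I would specialize Equation~\eqref{eq:OptFair} to $K=2$. With $\mathcal{S}=\{s,\bar s\}$ and $p_s+p_{\bar s}=1$, this gives
\begin{equation*}
f^*_B(\boldsymbol{x},s) \;=\; p_s\, Q_{f^*|s}\!\circ F_{f^*|s}\bigl(f^*(\boldsymbol{x},s)\bigr) \;+\; p_{\bar s}\, Q_{f^*|\bar s}\!\circ F_{f^*|s}\bigl(f^*(\boldsymbol{x},s)\bigr)\enspace.
\end{equation*}
Because the paper assumes that $\nu_{f^*|s}$ admits a density, $F_{f^*|s}$ is continuous and strictly increasing on the support of $\nu_{f^*|s}$, so $Q_{f^*|s}\circ F_{f^*|s}$ is the identity on that support. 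Hence the first summand reduces to $p_s\, f^*(\boldsymbol{x},s)$.

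Next I would invoke the univariate optimal transport result recalled in the Wasserstein distances subsection: between two absolutely continuous measures on $\mathbb{R}$, the optimal transport map from $\nu_{f^*|s}$ to $\nu_{f^*|\bar s}$ is given in closed form by $T_{s\to\bar s}=Q_{f^*|\bar s}\circ F_{f^*|s}$, and by construction $T_{s\to\bar s}(Z)\sim \nu_{f^*|\bar s}$ whenever $Z\sim\nu_{f^*|s}$. This gives the pushforward claim $T_{s\to\bar s}\circ f^*(\boldsymbol{X},s)\sim \nu_{f^*|\bar s}$ of the statement. Substituting the transport map into the above expression yields
\begin{equation*}
f^*_B(\boldsymbol{x},s) \;=\; p_s\, f^*(\boldsymbol{x},s) \;+\; p_{\bar s}\, T_{s\to\bar s}\bigl(f^*(\boldsymbol{x},s)\bigr)\enspace.
\end{equation*}

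Finally I would subtract $f^*(\boldsymbol{x},s)$ from both sides and use $p_s = 1-p_{\bar s}$:
\begin{equation*}
d_B(\boldsymbol{x},s) \;=\; (p_s - 1)\, f^*(\boldsymbol{x},s) + p_{\bar s}\, T_{s\to\bar s}\bigl(f^*(\boldsymbol{x},s)\bigr) \;=\; p_{\bar s}\bigl(\, T_{s\to\bar s}\circ f^*(\boldsymbol{x},s) - f^*(\boldsymbol{x},s)\,\bigr)\enspace,
\end{equation*}
which is the claimed identity~\eqref{eq:BiasDetectChara}. There is no real obstacle here: the whole argument is essentially bookkeeping once the closed form of $f_B$ and the closed form of the univariate optimal transport map are in hand. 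The only point requiring some care is the step $Q_{f^*|s}\circ F_{f^*|s}=\mathrm{id}$, which relies on the standing absolute continuity assumption on $\nu_{f^*|s}$ and on $f^*(\boldsymbol{x},s)$ lying in the support of $\nu_{f^*|s}$; I would mention this explicitly rather than sweep it into ``standard OT facts.''
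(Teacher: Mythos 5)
Your proposal is correct and follows essentially the same route as the paper's proof: expand the closed form \eqref{eq:OptFair} for the binary case, use $Q_{f^*|s}\circ F_{f^*|s}(f^*(\boldsymbol{x},s))=f^*(\boldsymbol{x},s)$ together with $p_s+p_{\bar s}=1$, and identify $Q_{f^*|\bar s}\circ F_{f^*|s}$ as the univariate optimal transport map with the stated pushforward property. Your explicit remark about the density assumption justifying $Q_{f^*|s}\circ F_{f^*|s}=\mathrm{id}$ on the support is a minor added precision the paper leaves implicit, not a different argument.
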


\begin{proof} %\textit{(proof sketch).}
     Given $(\boldsymbol{x}, s)\in\mathcal{X}\times S$, we are interested in the quantity $d_B(\boldsymbol{x}, s) = f^*_B(\boldsymbol{x}, s) - f^*(\boldsymbol{x}, s)$. %Specifically, we aim to provide an interpretation tailored for the binary sensitive scenario where $S=\{1, 2\}$. 
     For simplicity, we denote $u_s(\boldsymbol{x}) = F_{f^*|s} (f^*(\boldsymbol{x}, s))$. 
     Then, given $\Bar{s}\in \mathcal{S}-\{s\}$, we have,
    \begin{align*}
        f^*_B(\boldsymbol{x}, s) - f^*(\boldsymbol{x}, s) & = \left( \sum_{s'=1, 2} p_{s'}Q_{f^*|s'}\right)\circ F_{f^*|s}\left( f^*(\boldsymbol{x}, s) \right) - f^*(\boldsymbol{x}, s) \\
         %& = ( \sum_{s'=1, 2} p_{s'}Q_{f^*|s'})\circ F_{f^*|s}\left( f^*(\boldsymbol{x}, s) \right) - Q_{f^*|s}\circ F_{f^*|s}(f^*(\boldsymbol{x}, s)) \\
         %& = \sum_{s'=1, 2} p_{s'}Q_{f^*|s'} (u_{s}(\boldsymbol{x})) - Q_{f^*|s}(u_s(\boldsymbol{x}))\\
         & = p_{\Bar{s}}\cdot Q_{f^*|\Bar{s}} (u_{s}(\boldsymbol{x})) - 
         p_{s}\cdot Q_{f^*|s} (u_{s}(\boldsymbol{x})) + Q_{f^*|s} (u_{s}(\boldsymbol{x}))\\
         %& = p_{\Bar{s}}\cdot Q_{f^*|\Bar{s}} (u_{s}(\boldsymbol{x})) - (1-p_{s})\cdot Q_{f^*|s} (u_{s}(\boldsymbol{x}))\\
         & = p_{\Bar{s}}\cdot Q_{f^*|\Bar{s}} (u_{s}(\boldsymbol{x})) - 
         p_{\Bar{s}}\cdot Q_{f^*|s} (u_{s}(\boldsymbol{x}))\\
         & = p_{\Bar{s}}\cdot \left( Q_{f^*|\Bar{s}} (u_{s}(\boldsymbol{x})) -  f^*(\boldsymbol{x}, s)\right)\enspace.%\\
         %& = p_{\Bar{s}} \cdot \left( Q_{f^*|\Bar{s}} (u_{s}(\boldsymbol{x})) - Q_{f^*|s} (u_{s}(\boldsymbol{x})) \right)\\
         %& = p_{\Bar{s}} \cdot \left( Q_{f^*|\Bar{s}} (u_{s}(\boldsymbol{x})) - Q_{f^*|\Bar{s}} (u_{\Bar{s}}(\boldsymbol{x})) + Q_{f^*|\Bar{s}} (u_{\Bar{s}}(\boldsymbol{x})) - Q_{f^*|s} (u_{s}(\boldsymbol{x})) \right)\\
         %& = p_{\Bar{s}}\cdot \left(Q_{f^*|\Bar{s}} (u_{s}(\boldsymbol{x})) - Q_{f^*|\Bar{s}}(u_{\Bar{s}}(\boldsymbol{x}))\right) + p_{\Bar{s}}\cdot \left(f^*(\boldsymbol{x}, \Bar{s}) - f^*(\boldsymbol{x}, s)\right)\enspace.
    \end{align*}
    Since $f^*\in\mathcal{F}$, by definition of the Wasserstein distance, there exists a transport map $T_{s\to\Bar{s}}:\mathcal{Z} \to \mathcal{Z}$ such that $T_{s\to\Bar{s}}(\cdot) = Q_{f^*|\Bar{s}}\circ F_{f^*|s} (\cdot)$ with $T_{s\to\Bar{s}}\circ f^*(\boldsymbol{X}, s) \sim \nu_{f^*|\Bar{s}}$, which concludes the proof.
\end{proof}

%In a binary sensitive framework, Proposition~\ref{prop:BiasDetection} specifies that the value of $|d_B(\boldsymbol{X}, s)|$ is determined by how much the DP-unconstrained prediction for $(\boldsymbol{X}, s)$ changes relative to its projection onto $\nu_{f^*|\Bar{s}}$. To illustrate this projection, let us consider a scenario where the gender attribute of a person changes from male $s$ to female $\Bar{s}$. The \textit{r.h.s.} of Equation~\eqref{eq:BiasDetectChara} evaluates the gap between the base predictor and the newly projected predictor (where the associated features $\boldsymbol{X}|S=s$ are projected on the distribution $\nu_{f^|\Bar{s}}$). For instance, due to the average height difference between men and women, the projection adjusts the height of the man to match his female counterpart, ensuring a fair gender comparison. 

In a binary sensitive framework, Proposition~\ref{prop:BiasDetection} asserts that $|d_B(\boldsymbol{X}, s)|$ depends on how much the DP-unconstrained prediction for $(\boldsymbol{X}, s)$ deviates from its projection onto $\nu_{f^*|\Bar{s}}$. In other words, the \textit{r.h.s.} of Equation~\eqref{eq:BiasDetectChara} measures the disparity between the initial prediction and the projected prediction, where features $\boldsymbol{X}|S=s$ are aligned with $\boldsymbol{X}|S=\Bar{s}$. As a concrete example, changing a male individuals' gender to female, the projection also modifies related attributes (such as height or weight) to match a female counterpart, ensuring comparability when these attributes naturally differ on a group level. %This in turn avoids oversimplistic conclusions that a \emph{ceteris paribus} interpretation could entail. 
Note that $p_{\Bar{s}}$ enhances this bias for over-represented $\Bar{s}$ groups, but reduces its significance for under-represented ones.

%For example, if a person's gender changes from male $s$ to female $\Bar{s}$, the projection modifies the man's height to match his female counterpart, ensuring a fair comparison given the height differences between men and women.

%For instance, since, on average, men are taller than women, the projection adjusts the height of the man to match his female counterpart, ensuring a fair gender comparison.

\begin{table}[htbp]
    \footnotesize
    \begin{center}
    \begin{tabular}{|c|c|c|}
    \hline
     \textbf{\textit{ Task description }} & \textbf{\textit{ Probabilistic framework}} & \textbf{\textit{Empirical framework}} \\
    \hline
    Discrimination & $\mathds{1}\lbrace f_B^*(\boldsymbol{X}, S) - f^*(\boldsymbol{X}, S) \geq 0\rbrace$ & $\mathds{1}\lbrace\widehat{f_B}(\boldsymbol{X}, S) - \widehat{f}(\boldsymbol{X}, S) \geq 0\rbrace$ \\
    \hline
     Bias size & $\mathds{1}\lbrace|f_B^*(\boldsymbol{X}, S) - f^*(\boldsymbol{X}, S)| \geq \tau\rbrace$ & $\mathds{1}\lbrace|\widehat{f_B}(\boldsymbol{X}, S) - \widehat{f}(\boldsymbol{X}, S)| \geq \tau\rbrace$ \\
    \hline
     Outliers & $\mathds{1}\lbrace(f_B^*(\boldsymbol{X}, S) - f^*(\boldsymbol{X}, S))^2 \geq \tau\rbrace$ & $\mathds{1}\lbrace(\widehat{f_B}(\boldsymbol{X}, S) - \widehat{f}(\boldsymbol{X}, S))^2 \geq \tau\rbrace$ \\
    \hline
    \end{tabular}
    \caption{Bias / discrimination detection task}
    \label{tab:BiasDetectTask}
    \end{center}
\end{table}

Alternatively, the new task described in Equation~\eqref{eq:BiasDetectChara} can be viewed as a decomposition of biases into implicit and explicit components. Indeed, applying triangle inequality we have
    \begin{align*}
        |d_B(\boldsymbol{x}, s)| & \leq p_{\Bar{s}}\cdot\left( \ \left|T_{s\to\Bar{s}} \circ f^*(\boldsymbol{x}, s) - f^*(\boldsymbol{x}, \Bar{s})\right| +  \left|f^*(\boldsymbol{x}, \Bar{s}) - f^*(\boldsymbol{x}, s)\right|\ \right)\enspace.
    \end{align*}
%\begin{align*}
        %f^*_B(\boldsymbol{x}, s) - f^*(\boldsymbol{x}, s) & = p_{\Bar{s}}\cdot\left(\ T_{s\to\Bar{s}} \circ f^*(\boldsymbol{x}, s) - f^*(\boldsymbol{x}, s)\ \right)\enspace.%\\
         %& = p_{\Bar{s}} \cdot \left( Q_{f^*|\Bar{s}} (u_{s}(\boldsymbol{x})) - Q_{f^*|s} (u_{s}(\boldsymbol{x})) \right)\\
         %& = p_{\Bar{s}} \cdot \left( Q_{f^*|\Bar{s}} (u_{s}(\boldsymbol{x})) - Q_{f^*|\Bar{s}} (u_{\Bar{s}}(\boldsymbol{x})) + Q_{f^*|\Bar{s}} (u_{\Bar{s}}(\boldsymbol{x})) - Q_{f^*|s} (u_{s}(\boldsymbol{x})) \right)\\
         %& = p_{\Bar{s}}\cdot \left(Q_{f^*|\Bar{s}} (u_{s}(\boldsymbol{x})) - Q_{f^*|\Bar{s}}(u_{\Bar{s}}(\boldsymbol{x}))\right) + p_{\Bar{s}}\cdot \left(f^*(\boldsymbol{x}, \Bar{s}) - f^*(\boldsymbol{x}, s)\right)\enspace.
%    \end{align*}
In this context, implicit bias, which refers to the hidden influence of $s$, is measured as the difference between two values $T_{s\to\Bar{s}} \circ f^*(\boldsymbol{X}, s) - f^*(\boldsymbol{X}, \Bar{s})$ %Here, $T_{s\to\Bar{s}} \circ f^*(\boldsymbol{X}, s)$ and $f^*(\boldsymbol{X}, \Bar{s})$ 
both follow the same distribution $\nu_{f^*|\Bar{s}}$ (although not independent). This implicit bias represents the variation in predicted outcomes when the features $\boldsymbol{X}$ under $S=s$ are aligned with those under $S=\Bar{s}$, in contrast to unconstrained predictions where $s$ is simply replaced with $\Bar{s}$ without aligning the features. Explicit bias, conversely, is simplistically expressed as $f^*(\boldsymbol{X}, \Bar{s}) - f^*(\boldsymbol{X}, s)$.
This measurement aligns to the principle of \emph{ceteris paribus}, meaning ``\emph{all other things being equal}". However, when considered in isolation, this condition can lead to unrealistic situations. We provide a visual explanation in the Appendix \ref{app:vizref}.
%The explicit bias corresponds to the \emph{ceteris paribus} condition, which if considered in isolation, often results in unrealistic situation. 

%Indeed, since for all $s, s'\in\mathcal{S}$, $T_{s\to s'}(\cdot) = Q_{f^*|s'} \circ F_{f^*|s} (\cdot)$ represents the optimal transport map from $\nu_{f^*|s}$ to $\nu_{f^*|s'}$ that is...\textcolor{red}{TODO: en quoi implicit vs explicit + lié au transport + lié counterfactual}
%and $\left|f^*(\boldsymbol{X}, \Bar{s}) - f^*(\boldsymbol{X}, s)\right|$ \textcolor{red}{maybe relate it to the total variation metric }

\paragraph{Data-driven procedure} In real datasets, we use plug-in estimators from Section~\ref{subsec:Estimators} to estimate \(f^*\) and $f_B^*$, producing $\widehat{d_B} = \widehat{f_B} - \widehat{f}$, the empirical counterpart of $d_B$. Our goal is to train an estimator $g : \mathcal{X} \to \Tilde{\mathcal{Y}}$, where $\Tilde{Y} \in \Tilde{\mathcal{Y}}$ represents the new target task outlined in Table~\ref{tab:BiasDetectTask}'s last column. XAI methods are then used to pinpoint areas causing observed model unfairness in the initial ML model. For image classification, popular techniques like Grad-CAM (\cite{selvaraju2017grad}) create attention maps highlighting these biased areas. The pseudo-code for this approach is provided in Algorithm~\ref{alg:LocalXAI} with $\Tilde{Y}^\tau:= \mathds{1}\lbrace|\widehat{f_B}(\boldsymbol{X}, S) - \widehat{f}(\boldsymbol{X}, S)| \geq \tau\rbrace$ as the desired XAI task.
%For image classification, techniques like Grad-CAM (\cite{selvaraju2017grad}) create attention maps highlighting biased regions. Algorithm~\ref{alg:LocalXAI} provides the pseudocode, with \(\Tilde{Y}^\tau\) as the designated XAI task.

\begin{rem}[Impact of the parameter $\tau$ on the bias detection]
    In Table~\ref{tab:BiasDetectTask}, various tasks require establishing a threshold \(\tau > 0\) to identify essential bias-contributing regions. %Throughout this paper, \(\Tilde{Y}^\tau\) represents the corresponding XAI task. 
    We suggest determining \(\tau\) at a specific quantile \(\alpha \in (0, 1)\) within the sample \(\{|\widehat{d_B}(\boldsymbol{x_i}, s_i)|\}_{1\leq i\leq N}\), denoted as \(\widehat{Q}_{|\widehat{d_B}|}(\alpha)\). In particular, the choice of \(\alpha\) significantly influences the behavior of the bias detector. Indeed, a larger \(\alpha\) emphasizes causes with very high biases, while a smaller value identifies all possible causes the bias detector can identify. Opting for \(\alpha = 0.75\) might be a good choice since it results in a more balanced dataset, with approximately equal occurrences of \(\Tilde{Y}^\tau = 0\) and \(\Tilde{Y}^\tau = 1\), while also distinguishing significant areas for unfairness.
\end{rem}

\begin{algorithm}
    \footnotesize
   \caption{Bias Detection}
   \label{alg:LocalXAI}
\begin{algorithmic}
   \STATE {\bfseries Input:} new instance $(\boldsymbol{x}, s)$, base estimator $\Hat{f}$, fair estimator $\widehat{f_B}$, unlabeled sample $\mathcal{D}^{\text{pool}}  = \{(\boldsymbol{x}_i, s_i)\}_{i=1}^N$.
   \vspace{0.15cm}
   \STATE {\bf \quad Step 0.} Generate $(\Tilde{y}^\tau_i)_{1\leq i \leq N}$, where $\Tilde{y}^\tau$ can be constructed according to Table \ref{tab:BiasDetectTask} or according to the objective
   
   % $$
   % \Tilde{y}^\tau_i = \mathds{1}\left\{\left|\widehat{f}(\boldsymbol{x_i}, s_i) - \widehat{f_B}(\boldsymbol{x_i}, s_i)\right| \geq \tau\right\}\enspace,
   % $$
   % \quad \quad or alternatively other tasks depending on the objective,\\\quad \quad e.g., see Table~\ref{tab:BiasDetectTask}.
   %\vspace{-0.38cm}
   \vspace{0.11cm}
   \STATE {\bf \quad Step 1.} Train a learning model $g$ on $\{(\boldsymbol{x}_i, \Tilde{y}^\tau_i)\}_{i=1}^N$;
   \vspace{0.1cm}
   \STATE {\bf \quad Step 2.} Use Grad-CAM (or other XAI method) to generate attention maps;
   \vspace{0.15cm}
   \STATE {\bfseries Output:} Attention map of $g$ on instance $(\boldsymbol{x}, s)$.
\end{algorithmic}
\end{algorithm}

\section{Experiments}

We opt to showcase our method on image data, rather than tabular data, as it helps to highlight an important practical aspect. %that goes beyond the main methodology. 
Computer vision tasks are often performed on (partially) pre-trained models and compute time presents a major issue. The post-processing approach outlined in Equation \eqref{eq:plugin} is particularly attractive in these circumstances. As an example, rendering the scores fair throughout the applications in the experimental section took less than 0.1 seconds on average. Further, the XAI approach outlined in the previous section also works on pre-trained models as the transportation plan only depends on the produced scores. We first present how the standard approach outlined in Section \ref{sec:Fair} can be extended to standard computer vision architectures and then show how the bias detection task can help identify the regions associated with the bias. 

%As an example, rendering the scores fair throughout the applications in the experimental section took less than 0.1 seconds on a mid-range laptop - as compared to hours of training time for standard in-processing methods. 

%The bulk of research in transportation theory has focused on tabular settings and there is a lack applications in image related tasks. Given that computer vision tasks are often performed on (partially) pre-trained models and compute time presents a real issue, the post-processing approach outlined in Equation \eqref{eq:plugin} is particularly attractive. It is both applicable to a variety of model architectures and can be fitted in a matter of seconds. As an example, rendering the scores fair throughout the applications in the experimental section took less than 0.1 seconds on a mid-range laptop - as compared to hours of training time for standard in-processing methods.

%For image classification, the sensitive variable can be included by splitting a model into the well-established feature and classification blocks, where the binary sensitive feature $S$ can be used within the classification block as depicted in Figure \ref{fig:model_arch}. As this will lead to biased estimates, the use of \emph{calibrator}, trained on an unlabeled sample as outlined in Algorithm \ref{alg:projection} can be used to render the scores fair. To illustrate our approach, we will follow a working example outlined below. 

\subsection{Extension to Image Classification}\label{sec:FairImage}

An important detail from the above section is that in order to eliminate a bias from the predictions, the sensitive feature $S$ must be included in the modelling process to satisfy the assumptions of the optimal transport theory. This is due to the fact that simply excluding the sensitive information might lead the model to proxy for the sensitive variable which leads back to the initial problem of the biased predictions. For image classification, we consider a standard split into a feature (or embedding) and classification block, where we use a pre-trained embedding model and keep its parameters fixed throughout the whole procedure. The sensitive feature can then be added through a simple layer concatenation of the output from the embedding before it is fed into a classification block. As this will still result in biased scores, a supplementary calibration layer is added, which implements Equation \eqref{eq:plugin} and needs to be trained separately from the main model on a calibration data set. This indicates the need to split the data into three separate parts, the train set to fit the classification block to the specific data, a calibration set (corresponding to $\mathcal{D}^{\text{pool}}$ in Algorithm \ref{alg:projection}, that does \emph{not} need to be labeled), and a standard test set. The architecture is visualized in Figure \ref{fig:model_arch}. 

\begin{figure}[ht]
	\vspace{.1in}
	\centering
    \include{tikz/tikz_explainer_2}
    \caption{Standard fairness approach, the sensitive variable $S$ must be included the latest in the classification block. The calibration layer contains an operation which makes use of the Wasserstein Barycenter. The illustration above \emph{Unfair Scores} represent the marginal score distributions induced by a binary sensitive variable which are then transported to a common distribution for the \emph{Fair Scores}.
    }
    \label{fig:model_arch}
\end{figure}

% \begin{figure}[tbhp]
% \centerline{\includegraphics[scale=0.3]{}}
% \caption{Overview of prediction inputs}
% \label{fig:model_arch}
% \end{figure}

% Once the model has trained, the predictions are still biased. We run predictions on the calibration set, and from there estimate the CDF and quantile function for every subgroup, that is $(\hat{F}_{\hat{f}|s})_s, (\hat{Q}_{\hat{f}|s})_s$ from Algorithm \ref{alg:projection}. Note that the calculation relies on the assumption that the sensitive feature was present in the modelling process. Although a model trained will naturally start to proxy for the sensitive variable, only a direct inclusion of the sensitive variable can satisfy the assumptions of the optimal transport theory, hence the necessity to include the sensitive variable explicitly. 

% \begin{figure*}
%   \centering
%   \includegraphics[width=0.85\textwidth]{}
%   \caption{Scores from the "Beard" task, visualized are the bottom, median and top 200 scores on the 5000 visualisation images. }
%   \label{fig:rawscores}
% \end{figure*}

\subsection{Dataset and model}%\fran{DRAFT}
For our illustrations, we use the \texttt{CelebA}~\cite{liu2015deep} data set, containing more than 200,000 celebrity images, each annotated with 40 attributes. For the visualisations we selected a subset of 5,000 images with well-aligned facial features to obtain averaged predictions. Following the setup from Figure \ref{fig:model_arch}, we choose the pre-trained version of torchvision's \texttt{IMAGENET1K\_V2} trained on the imagenet database as our embedding model. With it, we use the built-in preprocessing steps on each image but consider the parameters fixed. On top of the embedding, we then add a classification block, similar in spirit to what \cite{wang2020towards} proposed. This block contains, before the output layer, three layers of size $[512,256,32]$ which take as input the vector of size 2048 from the embedding block and the sensitive feature of size 1. Each intermediate layer has a ReLU activation function applied to it and uses a 0.1 dropout. We split the data into 64\% training, 16\% calibration and 20\% test data. The model is then trained task wise for 10 epochs using a binary cross entropy loss and average results over 10 runs\footnote{All code can be found at: \href{https://github.com/FairInterpret/fair_images}{github.com/FairInterpret/fair\_images}}.

\subsection{Metrics and Prediction Tasks}
As the approach is valid for soft classifiers, we use the Area under the ROC curve (AUC) as the performance metric on the test set (denoted $A(f)$). We also measure the unfairness $\widehat{\mathcal{U}}(f)$ on the test-set as the empirical counterpart of Equation~\eqref{eq:Unf}, based on the Kolmogorov–Smirnov test,
$$
\widehat{\mathcal{U}}(f) := \max_{s, s'\in\mathcal{S}} \sup_{t\in\mathcal{Z}} \left| \Hat{F}_{f|s}(t) - \Hat{F}_{f|s'}(t) \right| \enspace,
$$
where $\Hat{F}_{f|s}$ is the empirical CDF of $f(\boldsymbol{X}, S)|S=s$. 

We consider three different binary prediction tasks from the data set (the variables \emph{Attractive}, \emph{Beard} and \emph{Young}) and consider \emph{Gender} as the sensitive variable. As bias identification usually requires substantial domain expertise, we demonstrate how our method works when we isolate the \emph{Beard} prediction task from influences of \emph{Gender}. As the positive labels for the task are almost exclusively present for male instances, we would expect the bias to be the largest for this task. Further, the task also has a well defined region of the image that should be used by the model to predict the label, making it suitable for visualisations with Grad-CAM.

\subsection{Results}

\begin{table}[htbp]
\footnotesize
\begin{center}
\begin{tabular}{|c|c|c|c|c|}
\hline
\multicolumn{1}{|c|}{\textbf{}} & \multicolumn{2}{|c|}{\textbf{Uncalibrated}} & \multicolumn{2}{|c|}{\textbf{Fairness-aware}} \\
\cline{1-5}
\cline{0-3}
 \textbf{\textit{Metric}}& {$A(f)$}& \textbf{$\hat{\mathcal{U}}(f)$}& {$A(f)$}& \textbf{$\hat{\mathcal{U}}(f)$} \\
 \hline
 \hline
 %\multirow{2}{*}{Attractive, vanilla} 
 %\multirow{2}{*}{Attractive, $S$-input} 
Attractive & 0.855$\pm$0.002 & 0.447$\pm$ 0.028 & 0.769$\pm$ 0.002 & \cellcolor{blue!15}0.011$\pm$ 0.001 \\
     %& $\pm$0.002 & $\pm$ 0.028 & $\pm$ 0.002 & $\pm$ 0.001 \\
\hline
% \multirow{2}{*}{Beard, vanilla} 
%Beard, vanilla& 0.935 $\pm$0.002 & 0.805 $\pm$ 0.013 & 0.764$\pm$ 0.009 & \cellcolor{blue!15}0.486$\pm$ 0.058 \\
 %   & $\pm$0.002 & $\pm$ 0.013 & $\pm$ 0.009 & $\pm$ 0.058 \\
 %\multirow{2}{*}{Beard, $S$-input} 

% Beard, $S$-input& 0.941$\pm$0.002 & 0.896 $\pm$ 0.01& 0.771 $\pm$ 0.018& \cellcolor{blue!15}0.531 $\pm$ 0.096 \\
     %& $\pm$0.002 & $\pm$ 0.01 & $\pm$ 0.018 & $\pm$ 0.096 \\
    Beard & 0.941$\pm$ 0.002 & 0.896 $\pm$ 0.009 & 0.731 $\pm$ 0.004 & \cellcolor{blue!15}0.010$\pm$ 0.002  \\

\hline
% \multirow{2}{*}{Young, vanilla} 
Young & 0.858$\pm$0.003 & 0.323$\pm$ 0.036 & 0.81$\pm$ 0.003 & \cellcolor{blue!15}0.013$\pm$ 0.003 \\
 %   & $\pm$0.003 & $\pm$ 0.036 & $\pm$ 0.003 & $\pm$ 0.003 \\
\hline
%\multicolumn{3}{l}{$^{\mathrm{a}}${\color{red} Focus on AUC instead of Accuracy}}
\end{tabular}
\caption{%Results Performance
 AUC \& Unfairness over 10 repetitions. Colored values highlight the achieved fairness.
}
\label{tab:results_sims}
\end{center}

\end{table}

The numerical results are summarized in Table \ref{tab:results_sims}. The \emph{Uncalibrated} columns present the results for a standard model that does not have a Calibration Layer, the \emph{Fairness-aware} columns represent a model of the form of Figure \ref{fig:model_arch}. As expected, all uncalibrated models present a significant level of unfairness, indicating the model learned to use gender in its predictions. The fairness-aware architecture manages to eliminate the bias, as indicated in the highlighted column in the results table, though it also results in a lower predictive accuracy as suggested by the theoretical analysis. 

We then apply our XAI methodology to the data. We use the \emph{Bias size} task from Table \ref{tab:BiasDetectTask}, re-fit the model to the new task $\Tilde{Y}^\tau$ and evaluate the attention maps using the Grad-CAM algorithm of \cite{selvaraju2017grad}. To see how this can effectively prevent \emph{fairwashing} and help establish a causal relation, we compare the results to attention maps obtained from the initial model (that is, from the model that modelled the \emph{Beard} task). Results are visualized in Figure \ref{fig:gradcam_all}. The left three columns are averages of 5,000 well-aligned raw images, split by the \emph{Gender} variable and the Grad-CAM attention map of the initial model. Most of the attention is indeed focused around the region where one would expect the relevant characteristics of a beard to lie, indicating a small bias (second and third column). However, this would stand in stark contrast with the numerical results in Table \ref{tab:results_sims}. With the help of our methodology we can instead isolate the regions that contribute to the bias which we observe (fourth column). We can clearly see that the neural network extensively focuses on features associated with gender (such as a receding hairline, twice as likely for male individuals or blond hair, around 10 times more likely for female individuals, or earrings around 20 times more likely for females), but not the area where a beard would be expected. Such ratios are easily identifiable and can be investigated further and our method can effectively help identify relevant features.

\begin{figure*}[htbp]
  \centering
  \includegraphics[width=0.7\textwidth]{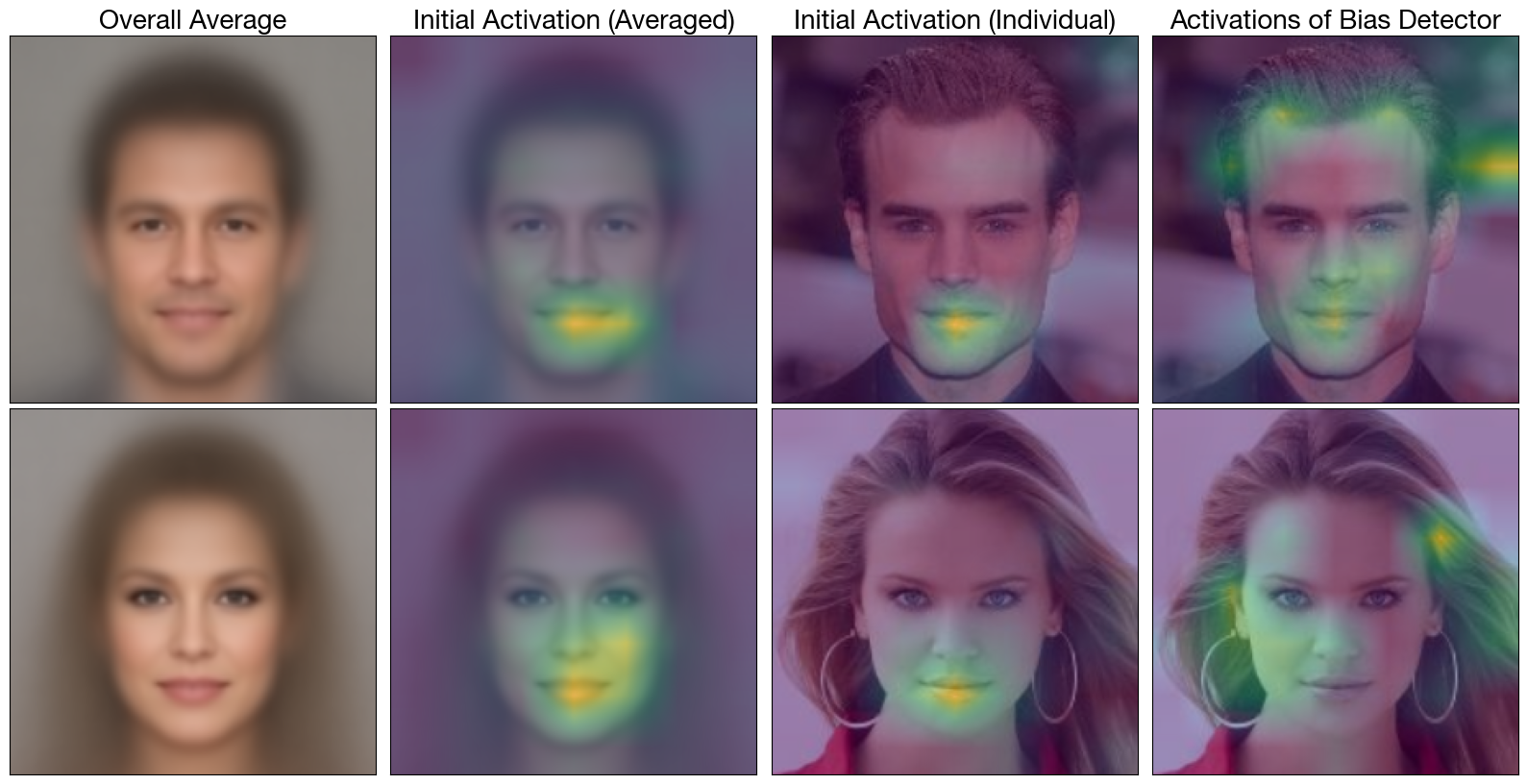}  \caption{GradCAM activations with gender as a sensitive feature when estimating ``Beard" task.}
  \label{fig:gradcam_all}
\end{figure*}

\section{Conclusion}

We investigated the use of optimal transport in both fairness and explainability applications for machine learning methods. Though both fields have produced significant advanced in recent years, the strong ties between the two fields remain underexplored. We showed through our applications that a standard optimal transport method for tabular data can readily be extended to computer vision tasks, by adapting the model architecture slightly. This resulted in an extremely efficient routine that can easily be integrated into standard workflows. Given that the method is based on optimal transport, we were then able to derive an XAI method based on the optimal transportation plan, which helps to identify the sources of a bias, permitting a more targeted investigation into the causes rather than the symptoms. This method also enables researchers to adopt a more holistic approach to the choice of sensitive variables, alleviating concerns of \emph{fairwashing} \cite{aivodji2019fairwashing} and indeed opens up a more objective discussion about emerging issues related to intersectional - \cite{kong2022intersectionally} or sequential \cite{hu2023sequentially} fairness.

%Throughout this article, we showed that bridging the gap between computer vision and optimal transport theory holds significant promises for mitigating and especially understanding unfairness in visual applications. By integrating optimal transport into computer vision pipelines, we can develop fairer models and use the insights to identify patterns that resulted in the bias in the first place. We opted for a post-processing approach, that is easily applicable to standard data-mining workflows and works well with pre-trained models at minimal computational cost. This interdisciplinary approach can contribute to the development of more unbiased computer vision systems, fostering trust and fairness in AI technologies. Our proposed solution addresses aware-fairness by considering sensitive features during training and fairness calibration. While recent work on optimal transport focuses on the unaware framework with binary sensitive features \cite{gaucher2023fair}, the challenge of fairness-unaware for discrete sensitive features remains underexplored and would be a natural extension of this work.

% Acknowledgments---Will not appear in anonymized version
\acks{This research was enabled in part by support provided by Calcul Québec (calculquebec.ca) and the Digital Research Alliance of Canada (alliancecan.ca)}

\bibliography{bibliography}

\begin{thebibliography}{48}
\providecommand{\natexlab}[1]{#1}
\providecommand{\url}[1]{\texttt{#1}}
\expandafter\ifx\csname urlstyle\endcsname\relax
  \providecommand{\doi}[1]{doi: #1}\else
  \providecommand{\doi}{doi: \begingroup \urlstyle{rm}\Url}\fi

\bibitem[Agueh and Carlier(2011)]{agueh2011barycenters}
Martial Agueh and Guillaume Carlier.
\newblock Barycenters in the wasserstein space.
\newblock \emph{SIAM Journal on Mathematical Analysis}, 43\penalty0 (2):\penalty0 904--924, 2011.

\bibitem[A{\"\i}vodji et~al.(2019)A{\"\i}vodji, Arai, Fortineau, Gambs, Hara, and Tapp]{aivodji2019fairwashing}
Ulrich A{\"\i}vodji, Hiromi Arai, Olivier Fortineau, S{\'e}bastien Gambs, Satoshi Hara, and Alain Tapp.
\newblock Fairwashing: the risk of rationalization.
\newblock In \emph{International Conference on Machine Learning}, pages 161--170. PMLR, 2019.

\bibitem[A{\"\i}vodji et~al.(2021)A{\"\i}vodji, Arai, Gambs, and Hara]{aivodji2021characterizing}
Ulrich A{\"\i}vodji, Hiromi Arai, S{\'e}bastien Gambs, and Satoshi Hara.
\newblock Characterizing the risk of fairwashing.
\newblock \emph{Advances in Neural Information Processing Systems}, 34:\penalty0 14822--14834, 2021.

\bibitem[Ali et~al.(2023)Ali, Abuhmed, El-Sappagh, Muhammad, Alonso-Moral, Confalonieri, Guidotti, Del~Ser, D{\'\i}az-Rodr{\'\i}guez, and Herrera]{ali2023explainable}
Sajid Ali, Tamer Abuhmed, Shaker El-Sappagh, Khan Muhammad, Jose~M Alonso-Moral, Roberto Confalonieri, Riccardo Guidotti, Javier Del~Ser, Natalia D{\'\i}az-Rodr{\'\i}guez, and Francisco Herrera.
\newblock Explainable artificial intelligence (xai): What we know and what is left to attain trustworthy artificial intelligence.
\newblock \emph{Information Fusion}, page 101805, 2023.

\bibitem[Alikhademi et~al.(2021)Alikhademi, Richardson, Drobina, and Gilbert]{alikhademi2021can}
Kiana Alikhademi, Brianna Richardson, Emma Drobina, and Juan~E Gilbert.
\newblock Can explainable ai explain unfairness? a framework for evaluating explainable ai.
\newblock \emph{arXiv preprint arXiv:2106.07483}, 2021.

\bibitem[Alvarez~Melis and Jaakkola(2018)]{alvarez2018towards}
David Alvarez~Melis and Tommi Jaakkola.
\newblock Towards robust interpretability with self-explaining neural networks.
\newblock \emph{Advances in neural information processing systems}, 31, 2018.

\bibitem[Arrieta et~al.(2020)Arrieta, D{\'\i}az-Rodr{\'\i}guez, Del~Ser, Bennetot, Tabik, Barbado, Garc{\'\i}a, Gil-L{\'o}pez, Molina, Benjamins, et~al.]{arrieta2020explainable}
Alejandro~Barredo Arrieta, Natalia D{\'\i}az-Rodr{\'\i}guez, Javier Del~Ser, Adrien Bennetot, Siham Tabik, Alberto Barbado, Salvador Garc{\'\i}a, Sergio Gil-L{\'o}pez, Daniel Molina, Richard Benjamins, et~al.
\newblock Explainable artificial intelligence (xai): Concepts, taxonomies, opportunities and challenges toward responsible ai.
\newblock \emph{Information fusion}, 58:\penalty0 82--115, 2020.

\bibitem[Berk(2012)]{berk2012criminal}
Richard Berk.
\newblock \emph{Criminal justice forecasts of risk: A machine learning approach}.
\newblock Springer Science \& Business Media, 2012.

\bibitem[Calders et~al.(2009)Calders, Kamiran, and Pechenizkiy]{calders2009building}
T.~Calders, F.~Kamiran, and M.~Pechenizkiy.
\newblock Building classifiers with independency constraints.
\newblock In \emph{IEEE international conference on Data mining}, 2009.

\bibitem[Charpentier et~al.(2023)Charpentier, Hu, and Ratz]{charpentier2023mitigating}
Arthur Charpentier, Fran{\c{c}}ois Hu, and Philipp Ratz.
\newblock Mitigating discrimination in insurance with wasserstein barycenters.
\newblock \emph{arXiv preprint arXiv:2306.12912}, 2023.

\bibitem[Chattopadhay et~al.(2018)Chattopadhay, Sarkar, Howlader, and Balasubramanian]{chattopadhay2018grad}
Aditya Chattopadhay, Anirban Sarkar, Prantik Howlader, and Vineeth~N Balasubramanian.
\newblock Grad-cam++: Generalized gradient-based visual explanations for deep convolutional networks.
\newblock In \emph{2018 IEEE winter conference on applications of computer vision (WACV)}, pages 839--847. IEEE, 2018.

\bibitem[Chiappa et~al.(2020)Chiappa, Jiang, Stepleton, Pacchiano, Jiang, and Aslanides]{chiappa2020general}
S.~Chiappa, R.~Jiang, T.~Stepleton, A.~Pacchiano, H.~Jiang, and J.~Aslanides.
\newblock A general approach to fairness with optimal transport.
\newblock In \emph{AAAI}, 2020.

\bibitem[Chzhen et~al.(2020)Chzhen, Denis, Hebiri, Oneto, and Pontil]{Chzhen_Denis_Hebiri_Oneto_Pontil20Wasser}
E.~Chzhen, C.~Denis, M.~Hebiri, L.~Oneto, and M.~Pontil.
\newblock Fair regression with wasserstein barycenters.
\newblock In \emph{Advances in Neural Information Processing Systems}, 2020.

\bibitem[Dastin(2018)]{Dastin_2018}
Jeffrey Dastin.
\newblock Amazon scraps secret ai recruiting tool that showed bias against women, Oct 2018.
\newblock URL \url{https://www.reuters.com/article/us-amazon-com-jobs-automation-insight-idUSKCN1MK08G}.

\bibitem[Dwork et~al.(2012)Dwork, Hardt, Pitassi, Reingold, and Zemel]{dwork2012fairness}
Cynthia Dwork, Moritz Hardt, Toniann Pitassi, Omer Reingold, and Richard Zemel.
\newblock Fairness through awareness.
\newblock In \emph{Proceedings of the 3rd innovations in theoretical computer science conference}, pages 214--226, 2012.

\bibitem[Esteva et~al.(2021)Esteva, Chou, Yeung, Naik, Madani, Mottaghi, Liu, Topol, Dean, and Socher]{esteva2021deep}
Andre Esteva, Katherine Chou, Serena Yeung, Nikhil Naik, Ali Madani, Ali Mottaghi, Yun Liu, Eric Topol, Jeff Dean, and Richard Socher.
\newblock Deep learning-enabled medical computer vision.
\newblock \emph{NPJ digital medicine}, 4\penalty0 (1):\penalty0 5, 2021.

\bibitem[Franco et~al.(2021)Franco, Oneto, Navarin, and Anguita]{franco2021toward}
Danilo Franco, Luca Oneto, Nicol{\`o} Navarin, and Davide Anguita.
\newblock Toward learning trustworthily from data combining privacy, fairness, and explainability: an application to face recognition.
\newblock \emph{Entropy}, 23\penalty0 (8):\penalty0 1047, 2021.

\bibitem[Garc{\'\i}a-Pe{\~n}alvo et~al.(2018)Garc{\'\i}a-Pe{\~n}alvo, Cruz-Benito, Mart{\'\i}n-Gonz{\'a}lez, V{\'a}zquez-Ingelmo, S{\'a}nchez-Prieto, and Ther{\'o}n]{garcia2018proposing}
Francisco Garc{\'\i}a-Pe{\~n}alvo, Juan Cruz-Benito, Mart{\'\i}n Mart{\'\i}n-Gonz{\'a}lez, Andrea V{\'a}zquez-Ingelmo, Jos{\'e}~Carlos S{\'a}nchez-Prieto, and Roberto Ther{\'o}n.
\newblock Proposing a machine learning approach to analyze and predict employment and its factors.
\newblock 2018.

\bibitem[Garreau and Mardaoui(2021)]{garreau2021does}
Damien Garreau and Dina Mardaoui.
\newblock What does lime really see in images?
\newblock In \emph{International conference on machine learning}, pages 3620--3629. PMLR, 2021.

\bibitem[Gaucher et~al.(2023)Gaucher, Schreuder, and Chzhen]{gaucher2023fair}
Solenne Gaucher, Nicolas Schreuder, and Evgenii Chzhen.
\newblock Fair learning with wasserstein barycenters for non-decomposable performance measures.
\newblock In \emph{International Conference on Artificial Intelligence and Statistics}, pages 2436--2459. PMLR, 2023.

\bibitem[Goodall(2014)]{goodall2014ethical}
Noah~J Goodall.
\newblock Ethical decision making during automated vehicle crashes.
\newblock \emph{Transportation Research Record}, 2424\penalty0 (1):\penalty0 58--65, 2014.

\bibitem[Gouic et~al.(2020)Gouic, Loubes, and Rigollet]{gouic2020projection}
Thibaut~Le Gouic, Jean-Michel Loubes, and Philippe Rigollet.
\newblock Projection to fairness in statistical learning.
\newblock \emph{arXiv preprint arXiv:2005.11720}, 2020.

\bibitem[Hu et~al.(2023{\natexlab{a}})Hu, Ratz, and Charpentier]{hu2023fairness}
Fran{\c{c}}ois Hu, Philipp Ratz, and Arthur Charpentier.
\newblock Fairness in multi-task learning via wasserstein barycenters.
\newblock In Danai Koutra, Claudia Plant, Manuel Gomez~Rodriguez, Elena Baralis, and Francesco Bonchi, editors, \emph{Machine Learning and Knowledge Discovery in Databases: Research Track}, pages 295--312, Cham, 2023{\natexlab{a}}. Springer Nature Switzerland.
\newblock ISBN 978-3-031-43415-0.

\bibitem[Hu et~al.(2023{\natexlab{b}})Hu, Ratz, and Charpentier]{hu2023sequentially}
Fran{\c{c}}ois Hu, Philipp Ratz, and Arthur Charpentier.
\newblock A sequentially fair mechanism for multiple sensitive attributes.
\newblock \emph{arXiv preprint arXiv:2309.06627}, 2023{\natexlab{b}}.

\bibitem[Joo and K{\"a}rkk{\"a}inen(2020)]{joo2020gender}
Jungseock Joo and Kimmo K{\"a}rkk{\"a}inen.
\newblock Gender slopes: Counterfactual fairness for computer vision models by attribute manipulation.
\newblock In \emph{Proceedings of the 2nd international workshop on fairness, accountability, transparency and ethics in multimedia}, pages 1--5, 2020.

\bibitem[Karako and Manggala(2018)]{karako2018using}
Chen Karako and Putra Manggala.
\newblock Using image fairness representations in diversity-based re-ranking for recommendations.
\newblock In \emph{Adjunct Publication of the 26th Conference on User Modeling, Adaptation and Personalization}, pages 23--28, 2018.

\bibitem[Kearns and Roth(2019)]{kearns2019ethical}
M.~Kearns and A.~Roth.
\newblock \emph{The ethical algorithm: The science of socially aware algorithm design}.
\newblock Oxford University Press, 2019.

\bibitem[Kim et~al.(2019)Kim, Ghorbani, and Zou]{kim2019multiaccuracy}
Michael~P Kim, Amirata Ghorbani, and James Zou.
\newblock Multiaccuracy: Black-box post-processing for fairness in classification.
\newblock In \emph{Proceedings of the 2019 AAAI/ACM Conference on AI, Ethics, and Society}, pages 247--254, 2019.

\bibitem[Kong(2022)]{kong2022intersectionally}
Youjin Kong.
\newblock Are “intersectionally fair” ai algorithms really fair to women of color? a philosophical analysis.
\newblock In \emph{Proceedings of the 2022 ACM Conference on Fairness, Accountability, and Transparency}, pages 485--494, 2022.

\bibitem[Lipton et~al.(2018)Lipton, McAuley, and Chouldechova]{lipton2018does}
Zachary Lipton, Julian McAuley, and Alexandra Chouldechova.
\newblock Does mitigating ml's impact disparity require treatment disparity?
\newblock \emph{Advances in neural information processing systems}, 31, 2018.

\bibitem[Liu et~al.(2015)Liu, Luo, Wang, and Tang]{liu2015deep}
Ziwei Liu, Ping Luo, Xiaogang Wang, and Xiaoou Tang.
\newblock Deep learning face attributes in the wild.
\newblock In \emph{Proceedings of the IEEE international conference on computer vision}, pages 3730--3738, 2015.

\bibitem[Lundberg and Lee(2017)]{lundberg2017unified}
Scott~M Lundberg and Su-In Lee.
\newblock A unified approach to interpreting model predictions.
\newblock \emph{Advances in neural information processing systems}, 30, 2017.

\bibitem[Mehrabi et~al.(2021)Mehrabi, Morstatter, Saxena, Lerman, and Galstyan]{mehrabi2021survey}
Ninareh Mehrabi, Fred Morstatter, Nripsuta Saxena, Kristina Lerman, and Aram Galstyan.
\newblock A survey on bias and fairness in machine learning.
\newblock \emph{ACM Computing Surveys (CSUR)}, 54\penalty0 (6):\penalty0 1--35, 2021.

\bibitem[Noiret et~al.(2021)Noiret, Lumetzberger, and Kampel]{noiret2021bias}
Sophie Noiret, Jennifer Lumetzberger, and Martin Kampel.
\newblock Bias and fairness in computer vision applications of the criminal justice system.
\newblock In \emph{2021 IEEE Symposium Series on Computational Intelligence (SSCI)}, pages 1--8. IEEE, 2021.

\bibitem[Nyholm and Smids(2016)]{nyholm2016ethics}
Sven Nyholm and Jilles Smids.
\newblock The ethics of accident-algorithms for self-driving cars: An applied trolley problem?
\newblock \emph{Ethical theory and moral practice}, 19\penalty0 (5):\penalty0 1275--1289, 2016.

\bibitem[Park et~al.(2021)Park, Hwang, Kim, and Byun]{park2021learning}
Sungho Park, Sunhee Hwang, Dohyung Kim, and Hyeran Byun.
\newblock Learning disentangled representation for fair facial attribute classification via fairness-aware information alignment.
\newblock In \emph{Proceedings of the AAAI Conference on Artificial Intelligence}, volume~35, pages 2403--2411, 2021.

\bibitem[Pedreshi et~al.(2008)Pedreshi, Ruggieri, and Turini]{pedreshi2008discrimination}
Dino Pedreshi, Salvatore Ruggieri, and Franco Turini.
\newblock Discrimination-aware data mining.
\newblock In \emph{Proceedings of the 14th ACM SIGKDD international conference on Knowledge discovery and data mining}, pages 560--568, 2008.

\bibitem[Qiang et~al.(2022)Qiang, Li, Brocanelli, and Zhu]{qiang2022counterfactual}
Yao Qiang, Chengyin Li, Marco Brocanelli, and Dongxiao Zhu.
\newblock Counterfactual interpolation augmentation (cia): A unified approach to enhance fairness and explainability of dnn.
\newblock In \emph{Proceedings of the Thirty-First International Joint Conference on Artificial Intelligence, IJCAI}, pages 732--739, 2022.

\bibitem[Ribeiro et~al.(2016)Ribeiro, Singh, and Guestrin]{ribeiro2016should}
Marco~Tulio Ribeiro, Sameer Singh, and Carlos Guestrin.
\newblock " why should i trust you?" explaining the predictions of any classifier.
\newblock In \emph{Proceedings of the 22nd ACM SIGKDD international conference on knowledge discovery and data mining}, pages 1135--1144, 2016.

\bibitem[Rudin(2019)]{rudin2019stop}
Cynthia Rudin.
\newblock Stop explaining black box machine learning models for high stakes decisions and use interpretable models instead.
\newblock \emph{Nature machine intelligence}, 1\penalty0 (5):\penalty0 206--215, 2019.

\bibitem[Santambrogio(2015)]{santambrogio2015optimal}
Filippo Santambrogio.
\newblock Optimal transport for applied mathematicians.
\newblock \emph{Birk{\"a}user, NY}, 55\penalty0 (58-63):\penalty0 94, 2015.

\bibitem[Selvaraju et~al.(2017)Selvaraju, Cogswell, Das, Vedantam, Parikh, and Batra]{selvaraju2017grad}
Ramprasaath~R Selvaraju, Michael Cogswell, Abhishek Das, Ramakrishna Vedantam, Devi Parikh, and Dhruv Batra.
\newblock Grad-cam: Visual explanations from deep networks via gradient-based localization.
\newblock In \emph{Proceedings of the IEEE international conference on computer vision}, pages 618--626, 2017.

\bibitem[Shapley(1997)]{shapley1997value}
L.~S Shapley.
\newblock A value for n-person games.
\newblock \emph{Classics in game theory}, 69, 1997.

\bibitem[Villani(2021)]{villani2021topics}
C{\'e}dric Villani.
\newblock \emph{Topics in optimal transportation}, volume~58.
\newblock American Mathematical Soc., 2021.

\bibitem[Villani et~al.(2009)]{villani2009optimal}
C{\'e}dric Villani et~al.
\newblock \emph{Optimal transport: old and new}, volume 338.
\newblock Springer, 2009.

\bibitem[Wang et~al.(2020{\natexlab{a}})Wang, Wang, Du, Yang, Zhang, Ding, Mardziel, and Hu]{wang2020score}
Haofan Wang, Zifan Wang, Mengnan Du, Fan Yang, Zijian Zhang, Sirui Ding, Piotr Mardziel, and Xia Hu.
\newblock Score-cam: Score-weighted visual explanations for convolutional neural networks.
\newblock In \emph{Proceedings of the IEEE/CVF conference on computer vision and pattern recognition workshops}, pages 24--25, 2020{\natexlab{a}}.

\bibitem[Wang et~al.(2020{\natexlab{b}})Wang, Qinami, Karakozis, Genova, Nair, Hata, and Russakovsky]{wang2020towards}
Zeyu Wang, Klint Qinami, Ioannis~Christos Karakozis, Kyle Genova, Prem Nair, Kenji Hata, and Olga Russakovsky.
\newblock Towards fairness in visual recognition: Effective strategies for bias mitigation.
\newblock In \emph{Proceedings of the IEEE/CVF conference on computer vision and pattern recognition}, pages 8919--8928, 2020{\natexlab{b}}.

\bibitem[Zehlike et~al.(2020)Zehlike, Hacker, and Wiedemann]{zehlike2020matching}
Meike Zehlike, Philipp Hacker, and Emil Wiedemann.
\newblock Matching code and law: achieving algorithmic fairness with optimal transport.
\newblock \emph{Data Mining and Knowledge Discovery}, 34\penalty0 (1):\penalty0 163--200, 2020.

\end{thebibliography}

\appendix

\section{Visual representation}\label{app:vizref}

To help with the interpretation of the different quantities in Section \ref{sec:XAIImage}, we created a graph below. As a simple working example, consider a wage model and we would like to analyse the distributions induced by a binary sensitive feature. In Figure \ref{fig:explainer_appendix}, we visualise these two marginal distributions, using their CDF. The CDF has the advantage that it scales both marginal distributions to the same range. 

Consider predictions run on the x-axis (from $f^*(x)$, where the sensitive feature is implicitly included in the features). As explained in the text, we would like to compare two individuals across their relative position within their subgroup (recall that, intuitively, the barycenter we apply to achieve fairness recreates a common distribution - but keeps the relative within group ordering constant to minimize the transportation cost). This can be done via the projection, that is, for a value on the x-axis, say the value obtained for $f(x_0, s=1)$, we calculate the value on the CDF for the group $s=1$ (the blue curve) and project the point to the CDF of the group $s=2$ (the orange dotted line). This then gives us the counterfactual based on optimal transport on the $x-axis$. Compare this to the ceteris paribus prediction (that is $f(x_0, s=2)$), which does not take into account the group level changes in $x_0$. This allows a very natural interpretation of the distance we are using for our auxiliary task. In a sense, we would like to explain the features that are relevant to explain large differences, be it directly related (that is the distance $|f(x_0,s=1) - f(x_0,s=2)|$) or indirectly related (the remainder of the total distance) to the group variable $s$.

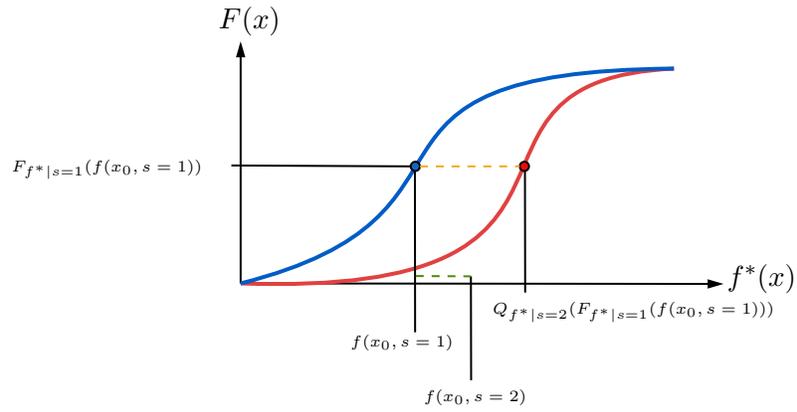
\begin{figure}[ht]
	\vspace{.1in}
	\centering
    \tikzset{every picture/.style={line width=0.75pt}} %set default line width to 0.75pt        

\begin{tikzpicture}[x=0.75pt,y=0.75pt,yscale=-1,xscale=1]
%uncomment if require: \path (0,300); %set diagram left start at 0, and has height of 300

%Straight Lines [id:da34493428190958975] 
\draw    (101.01,59.1) -- (101.01,179.5) ;
%Straight Lines [id:da5610744170520812] 
\draw    (101.01,179.5) -- (339.93,179.77) ;
%Curve Lines [id:da18793005073064073] 
\draw [color={rgb, 255:red, 224; green, 65; blue, 65 }  ,draw opacity=1 ][line width=1.5]   (101.01,179.5) .. controls (306.6,186.43) and (191.6,77.1) .. (319.54,71.09) ;
%Shape: Circle [id:dp34827496852575335] 
\draw  [color={rgb, 255:red, 0; green, 0; blue, 0 }  ,draw opacity=1 ][fill={rgb, 255:red, 239; green, 10; blue, 10 }  ,fill opacity=1 ][line width=0.75]  (241.77,120.43) .. controls (241.77,119.14) and (242.81,118.1) .. (244.1,118.1) .. controls (245.39,118.1) and (246.43,119.14) .. (246.43,120.43) .. controls (246.43,121.72) and (245.39,122.77) .. (244.1,122.77) .. controls (242.81,122.77) and (241.77,121.72) .. (241.77,120.43) -- cycle ;
%Curve Lines [id:da299904411828978] 
\draw [color={rgb, 255:red, 2; green, 92; blue, 198 }  ,draw opacity=1 ] [line width=1.5]  (101.01,179.5) .. controls (239.6,144.43) and (137.93,72.43) .. (319.54,71.09) ;
%Shape: Circle [id:dp8287462651390198] 
\draw  [color={rgb, 255:red, 0; green, 0; blue, 0 }  ,draw opacity=1 ][fill={rgb, 255:red, 23; green, 101; blue, 193 }  ,fill opacity=1 ][line width=0.75]  (186.77,120.43) .. controls (186.77,119.14) and (187.81,118.1) .. (189.1,118.1) .. controls (190.39,118.1) and (191.43,119.14) .. (191.43,120.43) .. controls (191.43,121.72) and (190.39,122.77) .. (189.1,122.77) .. controls (187.81,122.77) and (186.77,121.72) .. (186.77,120.43) -- cycle ;
%Shape: Boxed Line [id:dp23220239283834365] 
\draw    (244.39,122.77) -- (244.5,184.13) ;
%Straight Lines [id:da6397219432613375] 
\draw    (96.35,119.98) -- (186.77,120.43) ;
%Shape: Boxed Line [id:dp08046938788866398] 
\draw    (188.89,122.77) -- (189,204.2) ;
%Shape: Triangle [id:dp9898969051467064] 
\draw  [fill={rgb, 255:red, 0; green, 0; blue, 0 }  ,fill opacity=1 ] (101.01,59.1) -- (102.71,64.9) -- (99.31,64.9) -- cycle ;
%Shape: Triangle [id:dp49548946910857006] 
\draw  [fill={rgb, 255:red, 0; green, 0; blue, 0 }  ,fill opacity=1 ] (342.83,179.77) -- (337.03,181.47) -- (337.03,178.07) -- cycle ;
%Straight Lines [id:da1657941959836724] 
\draw[dashed] [color={rgb, 255:red, 245; green, 166; blue, 35 }  ,draw opacity=1 ]   (191.43,120.43) -- (241.77,120.43) ;
%Shape: Boxed Line [id:dp8044163695236335] 
\draw    (217.14,175.94) -- (217.25,228.38) ;
%Straight Lines [id:da3761853074214718] 
\draw[dashed] [color={rgb, 255:red, 81; green, 137; blue, 11 }  ,draw opacity=1 ]   (189.35,175.78) -- (217.14,175.94) ;

% Text Node
\draw (88.52,38.3) node [anchor=north west][inner sep=0.75pt]    {$F( x)$};
% Text Node
\draw (345,168.11) node [anchor=north west][inner sep=0.75pt]    {$f^{*}( x)$};
% Text Node
\draw (226.72,187.38) node [anchor=north west][inner sep=0.75pt]  [font=\tiny]  {$Q_{f^{*} |s=2}( F_{f^{*} |s=1}( f( x_{0} ,s=1)))$};
% Text Node
\draw (155.1,203.93) node [anchor=north west][inner sep=0.75pt]  [font=\tiny]  {$f(x_{0},s=1)$};
% Text Node
\draw (-15.97,115.4) node [anchor=north west][inner sep=0.75pt]  [font=\tiny]  {$F_{f^{*} |s=1}( f( x_{0} ,s=1))$};
% Text Node
\draw (192.05,231.53) node [anchor=north west][inner sep=0.75pt]  [font=\tiny]  {$f( x_{0} ,s=2)$};

\end{tikzpicture}
    \caption{
    Illustration of different points and counterfactuals. 
    }
    \label{fig:explainer_appendix}
\end{figure}

\end{document}